\newcommand{\hyp}[1]{\ensuremath{\hplain_{#1}}}
\newcommand{\ohyp}[1]{\ensuremath{\hplain'_{#1}}}
\renewcommand{\H}{\ensuremath{\mathcal{H}}}
\newcommand{\hplain}{\ensuremath{h}}
\newcommand{\err}{\text{\textup{err}}}
\newcommand{\ex}[1]{\ensuremath{X_{#1}}}
\newcommand{\query}[1]{\ensuremath{Q_{#1}}}
\newcommand{\queryp}[1]{\ensuremath{P_{#1}}}
\newcommand{\ind}[1]{\boldsymbol{1} \left\{ #1 \right\}}
\newcommand{\lab}[1]{\ensuremath{Y_{#1}}}
\newcommand{\delay}{\ensuremath{\tau}}
\newcommand{\numex}[1]{\ensuremath{n_{#1}}}
\renewcommand{\P}{\ensuremath{\mathbb{P}}}
\newcommand{\E}{\ensuremath{\mathbb{E}}}
\newcommand{\samp}[1]{\ensuremath{z_{1:{#1}}}}
\newcommand{\hopt}{\ensuremath{\hplain^*}}
\newcommand{\pmin}[2]{\ensuremath{P_{\min,#1}(#2)}}
\newcommand{\order}{\ensuremath{\mathcal{O}}}
\newcommand{\activealg}{\ensuremath{\mathcal{A}}}
\newcommand{\passive}{\ensuremath{\mathcal{P}}}
\newcommand{\Sec}[1]{Section~\ref{sec:#1}}
\newcommand{\Parens}[1]{\left(#1\right)}
\newtheorem{theorem}{Theorem}
\newtheorem{lemma}{Lemma}
\newtheorem*{theoremGen}{Theorem~\ref{thm:generalization}}
\def\eg.{\mbox{e.}\mbox{g.}} 
\def\ie.{\mbox{i.}\mbox{e.}} 
\newcommand{\figsqueeze}{\vskip-6pt}
\title{Para-active Learning}
\author{
Coauthor \\
Affiliation \\
Address \\
\texttt{email} \\
\And
Coauthor \\
Affiliation \\
Address \\
\texttt{email} \\
\AND
Coauthor \\
Affiliation \\
Address \\
\texttt{email} \\
}
\let\savedtitle=\@title
\let\savedmaketitle=\@maketitle
\begin{document}

\begin{center}
{\LARGE{{\bf{Para-active learning}}}}

\vspace*{.2in}

\begin{tabular}{cccc}
  Alekh Agarwal
&
  L\'{e}on Bottou
&
  Miroslav Dud\'ik
&
  John Langford
  \\
\end{tabular}

\vspace*{.2in}

Microsoft Research\\
New York NY USA\\
\texttt{\{alekha, leonbo, mdudik, jcl\}@microsoft.com}

\vspace*{.2in}

\begin{abstract}

Training examples are not all equally informative. Active learning
strategies leverage this observation in order to massively reduce the
number of examples that need to be labeled.  We leverage the same
observation to build a generic strategy for parallelizing learning
algorithms. This strategy is effective because the search for
informative examples is highly parallelizable and because we show that
its performance does not deteriorate when the sifting process relies on a slightly
outdated model. Parallel active learning is particularly attractive
to train nonlinear models with non-linear representations because
there are few practical parallel
learning algorithms for such models. We report preliminary experiments
using both kernel SVMs and SGD-trained neural networks.
\end{abstract}

\end{center}

\section{Introduction}

The emergence of large datasets in the last decade has seen a growing
interest in the development of parallel machine learning
algorithms. In this growing body of literature, a particularly
successful theme has been the development of distributed optimization
algorithms parallelizing a large class of machine learning algorithms
based on convex optimization. There have been results on parallelizing
batch~\cite{TeoSVL07, RechtReWrNi2011, BoydPaChPeEc2011},
online~\cite{LangfordSmZi09, DekelGiShXi10, AgarwalDu11} and hybrid
variants~\cite{Tera}. It can be argued that these approaches aim to
parallelize the existing optimization procedures and do not exploit
the statistical structure of the problem to the full extent, beyond
the fact that the data is distributed i.i.d. across the compute
nodes. Other
authors~\cite{McDonaldHM10,ZinkevichWeSmLi10,ZhangDuWa2012,ZhangDuWa2013}
have studied different kinds of bagging and model averaging approaches
to obtain communication-efficient algorithms, again only relying on
the i.i.d. distribution of data across a cluster. These approaches are
often specific to a particular learning algorithm (such as the
perceptron or stochastic gradient descent), and model averaging relies
on an underlying convex loss.
A separate line of theoretical research focuses on optimizing
communication complexity in distributed settings when learning
arbitrary hypothesis classes, with a lesser emphasis on the running time
complexity~\cite{BalcanBlFiMa12,DaumePhSsVe12a,DaumePhSsVe12b}.
Our goal here is to cover a broad set of hypothesis classes,
and also achieve short running times to
achieve a given target accuracy, while employing scalable
communication schemes.

The starting point of our work is the observation that in any large
data sample, \emph{not all the training examples are equally
  informative} \cite{mackay-1992}. Perhaps the simplest example is
that of support vector machines where the support vectors form a small
set of informative examples, from which the full-data solution can be
constructed. The basic idea of our approach consists of using
parallelism to sift the training examples and select those worth using
for model updates, an approach closely related to active learning
\cite{cohn-atlas-ladner-1990,fedorov-1972}. Active learning algorithms
seek to learn the function of interest while minimizing the number of
examples that need to be labeled. We propose instead to use active
learning machinery to redistribute the computational effort from the
potentially expensive learning algorithm to the easily parallelized
example selection algorithm.

The resulting approach has several advantages. Active learning
algorithms have been developed both in agnostic settings to work
with arbitrary hypothesis
classes~\cite{cohn-atlas-ladner-1990,BeygelzimerDL09} as well as
in 
settings where they were tailored to specific hypothesis
classes~\cite{bordes-2005}. Building on existing active
learning algorithms allows us to obtain algorithms that work across a
large variety of hypothesis classes and loss functions. This class
notably includes many learning algorithms with non-convex
representations, which are often difficult to parallelize. The
communication complexity of our algorithm is equal to the label
complexity of an active learner with delayed updates.  We provide some
theoretical conditions for the label complexity to be small for a
delayed active learning scheme similar to Beygelzimer et
al.~\cite{BeygelHLZ10}.  On the computational side, the gains of our
approach depend on the relative costs of training a model and
obtaining a prediction (since the latter is typically needed by an
active learning algorithm to decide whether to query a point or not).

In the following section, we present a formal description and a
high-level analysis of running time and communication complexity of
our approach. Two unique challenge arising in distributed settings are
a synchronization overhead and a varying speed with which nodes
process data. Both of them can yield delays in model updating. In
\Sec{analysis}, we theoretically study a specific active learning
strategy and show that its statistical performance is not
substantially affected by delays.  While our method is fully general,
there are two broad classes of problems where we expect our method to
advance state of the art most: learning algorithms with non-linear
training times and learning algorithms based on non-convex
objectives. In \Sec{experiments} we evaluate our approach on kernel
SVMs and neural networks, experimentally demonstrating its
effectiveness in both of these regimes.

\begin{figure}
\begin{center}
\includegraphics[width=.75\linewidth]{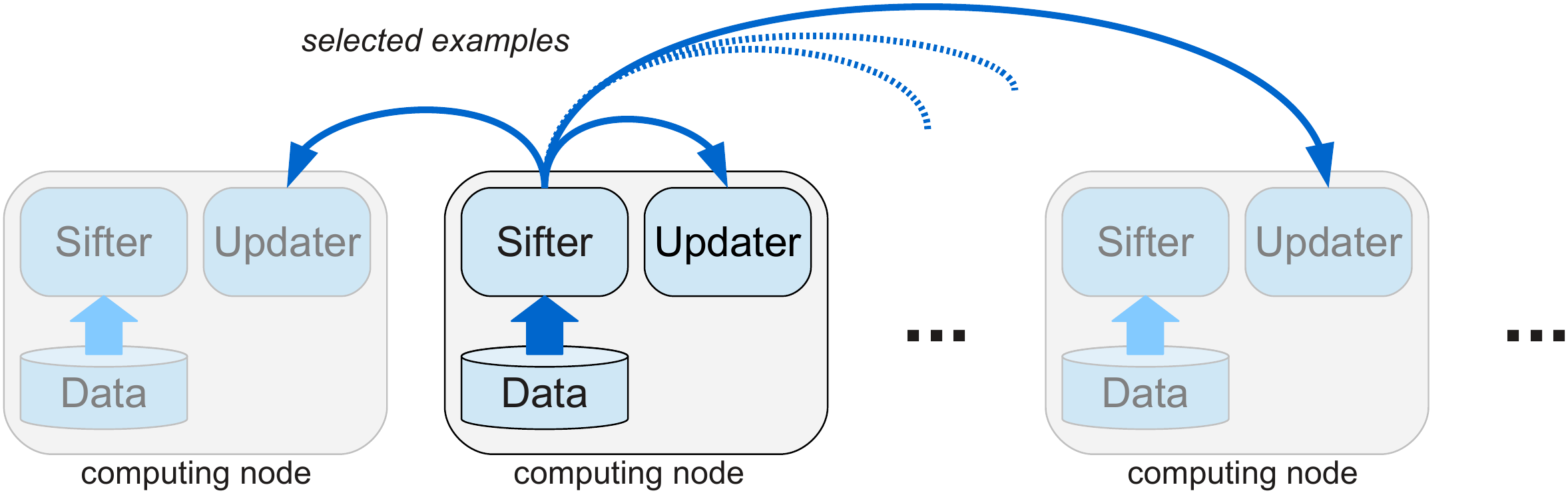}
\caption{\label{fig-paraactive} Parallel active learning.
Each computing node contains an active learner (\emph{sifter})
and a passive learner (\emph{updater}).
The sifter selects interesting training examples and
broadcast them to all nodes. The updater receives
the broadcasts and updates the model. The communication
protocol ensures that examples arrive to each updater in the
same order.}
\end{center}
\figsqueeze
\end{figure}

\begin{algorithm}
\small
  \begin{algorithmic}
    \REQUIRE Initial hypothesis $h_1$, global batch size $B$, active learner
    $\activealg$, passive updater $\passive$.
    \FORALL{rounds $t=1,2,\ldots, T$}
    \FORALL{nodes $i=1,2,\ldots,k$ \emph{in parallel}}
    \STATE Take local data set $X_{i,t}$ with $|X_{i,t}| = B/k$.
    \STATE Obtain $(U_{i,t}, p_{i,t}) = \activealg(X_{i,t}, h_t)$.
    \ENDFOR
    \STATE Let $S_t = \{(U_{i,t}, Y_{i,t}, p_{i,t})~:~1
    \leq i \leq k\}$.
    \STATE Update $h_{t+1} = \passive(S_t, h_t)$.
    \ENDFOR
  \end{algorithmic}
  \caption{Synchronous para-active learning}
  \label{alg:pal-synch}
\end{algorithm}

\section{Parallel active learning}
\label{sec:setup}

In this section we present and analyze our main algorithms
in an abstract setup. Specific instantiations
are then studied
theoretically and empirically in the following sections.

\subsection{Algorithms}
\label{sec:algos}

This paper presents two algorithms, one of which is synchronous and
the other is asynchronous. We will
start with the conceptually simpler synchronous setup in order
to describe our algorithm. We assume there are $k$
nodes in a distributed network, each equipped with its own stream of
data points.

The algorithm operates in two phases, an \emph{active filtering} phase
and a \emph{passive updating} phase. In the first phase, each node
goes over a batch of examples, picking the ones selected by an active
learning algorithm using the current model. Model is not updated in
this phase.  At the end of the phase, the examples selected at all
nodes are pooled together and used to update the model in the
second phase. The second phase can be implemented either at a central
server, or locally at each node if the nodes broadcast the selected
examples over the network. Note that at any given point in time all
nodes have the same model.

\begin{algorithm}
\small
  \begin{algorithmic}
    \REQUIRE Initial hypothesis $h_1$, active learner $\activealg$,
    passive updater $\passive$.
    \STATE Initialize $Q^i_S = \emptyset$ for each node $i$.
    \WHILE{true}
    \FORALL{nodes $i=1,2,\ldots,k$ \emph{in parallel}}

    \WHILE{$Q^i_S$ is not empty}
    \STATE $(x,y,p) = \mbox{fetch}(Q^i_S)$.
    \STATE Update $h^i_\text{new} = \passive((x,y,p),h^i_\text{old})$.
    \ENDWHILE

    \IF{$Q^i_F$ is non-empty}
    \STATE $(x,y) = \mbox{fetch}(Q^i_F)$.
    \STATE Let $p = \activealg(x,h)$ be the probability of $\activealg$ querying $x$
    \STATE With probability $p$:
    \STATE \quad Broadcast $(x,y,p)$ for addition to $Q^j_S$ for all $j$.
    \ENDIF
    \ENDFOR
    \ENDWHILE
  \end{algorithmic}
    \caption{Asynchronous para-active learning}
  \label{alg:pal-asynch}
\end{algorithm}

A critical component of this algorithm is the active learning
strategy.  We use the importance weighted active learning strategy
(IWAL) which has several desirable properties:
consistency, generality~\cite{BeygelzimerDL09}, good rates of
convergence~\cite{BeygelzimerHLZ10} and efficient
implementation~\cite{KarampatziakisL11}.  The IWAL approach operates
by choosing a not-too-small probability of labeling each example and then
flipping a coin to determine whether or not an actual label is asked.

The formal pseudocode is described in Algorithm~\ref{alg:pal-synch}. In
the algorithm, we use $\activealg$ to denote an active learning
algorithm which takes a hypothesis $h$ and an unlabeled example set
$X$ and returns $\activealg(h,X) = (U,p)$ where $U \subseteq X$ and
$p$ is a vector of probabilities with which elements in $X$ were
subsampled to obtain $U$. We also assume access to a passive learning
algorithm $\passive$ which takes as input a collection of labeled
importance weighted examples and the current hypothesis, and returns
an updated hypothesis.

While the synchronous scheme is easier to understand and implement, it
suffers from the drawback that the hypothesis is updated somewhat
infrequently. Furthermore, it suffers from the usual synchronization
bottleneck, meaning one slow node can drive down the performance of
the entire system. Asynchronous algorithms offer a natural solution to
address these drawbacks.

Algorithm~\ref{alg:pal-asynch} is an asynchronous version
of Algorithm~\ref{alg:pal-synch}. It maintains two queues $Q^i_F$ and $Q^i_S$
at each node $i$. $Q^i_F$ stores the fresh examples from the local
stream which haven't been processed yet, while $Q^i_S$ is the queue of
examples selected by the active learner at some node, which need to be
used for updating the model. The algorithm always gives higher
priority to taking examples from $Q^i_S$ which is crucial to its
correct functioning. The communication
protocol ensures that examples arrive to $Q^i_S$ for each $i$ in the
same order. This ensures that models across the nodes agree up to the
delays in $Q^i_S$. See Figure~\ref{fig-paraactive} for a pictorial
illustration.

\subsection{Running time and communication complexity}
\label{sec:complexity}

Consider first an online training algorithm that needs $T(n)$
operations to process $n$ examples to produce a statistically
appropriate model. Apart from this \emph{cumulative training complexity}, we
are also interested in \emph{per-example evaluation complexity} $S(n)$, which
is the time that it takes to evaluate the model on a single example.
For instance, the optimization of a linear model using stochastic
gradient descent requires $T(n){\sim}n$ operations and produces a model
with evaluation complexity $S(n)$ independent of the number of training
examples, \eg. \cite{bottou-2010}. In contrast, training a kernel
support vector machine produces a model with evaluation complexity
$S(n){\sim}n$ and requires at least $T(n){\sim}n^2$ operations to train
(asymptotically, a constant fraction of the examples become support
vectors \cite{steinwart-2004}).

Consider now an example selection algorithm that requires $S(n)$
operations to process each example and decide whether the example
should be passed to the actual online learning algorithm with a
suitable importance weight.  Let $\phi(n)$ be the total number of
selected examples. In various situations, known active learning
algorithms can select as little as $\phi(n){\sim}\log(n)$ and yet
achieve comparable test set accuracy.

Since we intend to sift the training examples in parallel, each
processing node must have access to a fresh copy of the current model.
We achieve this with a communication cost that does not depend on the
nature of the model, by broadcasting all the selected examples. As
shown in Figure~\ref{fig-paraactive}, each processing node can then
run the underlying online learning algorithm on all the selected
examples and update its copy of the model. This requires $\phi(n)$
broadcast operations which can be implemented efficiently using basic
parallel computing primitives.

\begin{figure}
\small\center
\begin{tabular}{lccc}
\toprule
& \bf Sequential Passive & \bf Sequential Active & \bf Parallel Active \\
\midrule
\bf Operations
& $T(n)$ & $n S(\phi(n)) + T(\phi(n))$ & $n S(\phi(n)) + k T(\phi(n))$ \\
\bf Time
& $T(n)$ & $n S(\phi(n)) + T(\phi(n))$ & $n S(\phi(n))/k + T(\phi(n))$ \\
\bf Broadcasts
& $0$ & $0$ & $\phi(n)$ \\
\bottomrule
\end{tabular}
\caption{\label{fig-informal} Number of operations, execution time, and
  communication volume for sequential passive training, sequential active training, and
  parallel active training on $n$ examples and $k$ nodes.
}
\end{figure}

Figure~\ref{fig-informal} gives a sense of how the execution time can
scale with different strategies. Two speedup opportunities arise when
the active learning algorithm selects a number of examples
$\phi(n){\ll}n$ and therefore ensures that $T(\phi(n)){\ll}T(n)$.  The
first speedup opportunity appears when $nS(\phi(n)){\ll}T(n)$ and
benefits both the sequential active and parallel active strategies.
For instance, kernel support vector machines benefit from this speedup
opportunity because $nS(\phi(n)){\sim}n\phi(n){\ll}T(n)$, but neural
networks do not because $nS(\phi(n)){\sim}n{\sim}T(n)$.  The second
opportunity results from the parallelization of the sifting
phase. This speedup is easier to grasp when $nS(n){\sim}T(n)$ as is
the case for both kernel support vector machines and neural networks.
One needs $k{\sim}n/\phi(n)$ computing nodes to ensure that the
sifting phase does not dominate the training time. In other words,
the parallel speedup is limited by both the number of computing nodes
and the active learning sampling rate.
\figsqueeze

\section{Active learning with delays}
\label{sec:analysis}
\figsqueeze

In most standard active learning algorithms, the model is updated as
soon as a new example is selected before moving on to the remaining
examples. Both generalization error and label complexity are
typically analyzed in this setting. However, in the synchronous
Algorithm~\ref{alg:pal-synch}, there can be a delay of as many as $B$
examples ($B/k$ examples on each node) between an example
selection and the model update. Similarly, communication
delays in the asynchronous Algorithm~\ref{alg:pal-asynch} lead to
small variable delays in updating the model. Such delays
could hurt the performance of an active learner. In this section we
demonstrate that this impact is negligible for the particular
importance weighted active learning scheme of Beygelzimer et
al.~\cite{BeygelHLZ10}. While we only analyze this specific
case, it is plausible that the
performance impact is also negligible for other online selective
sampling strategies~\cite{Cesa-Bianchi2009,Orabona2011}.

We now analyze the importance weighted active learning (IWAL) approach
using the querying strategy of Beygelzimer et
al.~\cite{BeygelzimerHLZ10} in a setting with delayed updates.
At a high level, we establish identical generalization
error bounds and show that there is no substantial degradation of the
label complexity analysis as long as the delays are not too
large. We start with the simple setting where the delays are
fixed. Given a time $t$, $\delay(t)$ will be used to denote the
delay until which the labelled examples are
available to the learner. Hence $\delay(t) = 1$ corresponds to
standard active learning.

Algorithm~\ref{alg:delay-iwal} formally describes the
IWAL with delays. Following
Beygelzimer et al.~\cite{BeygelzimerHLZ10}, we let $C_0 =
\order((\log|\H|/\delta)) \geq 2$ be a tuning parameter, while we set
$c_1 = 5+2\sqrt{2}$ and $c_2=5$.
The
algorithm uses the empirical importance weighted error $\err(\hplain,
S_t)$ of hypothesis $\hplain$ on all examples up to (and including)
the example $t - \delay(t)$. Formally, we define
\begin{equation*}
  \err(\hplain, S_t) = \frac{1}{t-\delay(t)}
    \sum_{s=1}^{t-\delay(t)} \frac{\query{s}}{\queryp{s}}
    \ind{\hplain(\ex{s}) \ne \lab{s}},
\end{equation*}
where $\query{s}$ is an indicator of whether we queried the
label $\lab{s}$ on example $s$, $\queryp{s}$ is the probability of
$\query{s}$ being one conditioned on everything up to example $s-1$,
and $\ind{\cdot}$ is the indicator function.

\begin{algorithm}
\small
  \begin{algorithmic}
    \REQUIRE Constants $C_0$, $c_1$, $c_2$.
    \STATE Initialize $S_0 = \emptyset$.

    \FORALL{time steps $t=1,2,\ldots,T$}
    \STATE Let $\hyp{t} = \arg\min\{\err(\hplain, S_t)~:~\hplain \in \H \}$.
    \STATE Let $\ohyp{t} = \arg\min\{\err(\hplain, S_t)~:~\hplain \in \H \wedge \hplain(\ex{t}) \ne \hyp{t}(\ex{t})\}$.
    \STATE Let $G_{t} = \err(\hyp{t}, S_t) - \err(\ohyp{t}, S_t)$, and
    {\small\[
      \queryp{t} = \begin{cases}
      1 & \text{if } G_{t} \leq
                     \sqrt{\frac{C_0\log (t-\delay(t)+1)}{t - \delay(t)}}
                     + \frac{C_0\log( t-\delay(t)+1)}{t-\delay(t)}
      \\
      s & \text{otherwise,}
      \end{cases}
    \]}\relax
    where $s \in (0,1)$  is the positive solution to the equation
    \begin{equation}\textstyle
    \label{eqn:iwal-queryp}
      G_{t} = \left( \frac{c_1}{\sqrt{s}} -c_1 + 1 \right) \cdot
      \sqrt{\frac{C_0\log (t - \delay(t)+1)}{t - \delay(t)}} + \left(
      \frac{c_2}{s} - c_2 + 1 \right) \frac{C_0\log (t -
        \delay(t)+1)}{t - \delay(t)}
    \enspace.
    \end{equation}
    \STATE Query $\lab{t}$ with probability $\queryp{t}$.
\if0
    \STATE Let {\small $S_{t+1}=S_t \cup
    \{(\ex{t-\delay(t) + s}, \lab{t-\delay(t) + s},
    \queryp{t - \delay(t) + s}) \::\:
      0<s\leq \delay(t)+1-\delay(t+1)
      \text{ and $Y_{t-\delay(t)+s}$ was queried}\}$.}
\fi
    \STATE Let $S_{t+1} = \{
        (\ex{t-s}, \lab{t-s}, \queryp{t - s})
        ~:~ s \ge \delay(t+1)-1 \text{ and $Y_{t-s}$ was queried} \}$.
    \ENDFOR
  \end{algorithmic}
  \caption{Importance weighted active learning with delays}
  \label{alg:delay-iwal}
\end{algorithm}

\subsection{Generalization error bound}

We start with a generalization error bound. It turns out that the theorem of
Beygelzimer et al.~\cite{BeygelzimerHLZ10} applies without major
changes to the delayed setting, even though that is not immediately apparent. The
main steps of the proof are described in Appendix~\ref{app:delayed}.
For convenience, define
$\numex{t} = t - \delay(t)$.
The bound for IWAL with delayed updates takes the following form:
%
\begin{theorem}
\label{thm:generalization}
  For each time $t \geq 1$, with probability at least $1-\delta$ we have
  \begin{equation*}
    0 \leq \err(\hyp{t}) - \err(\hopt) \leq
    \err(\hyp{t},S_t) - \err(\hopt, S_t)
    + \sqrt{\frac{2C_0\log(\numex{t}+1)}{\numex{t}}} +
    \frac{2C_0\log(\numex{t}+1)}{\numex{t}}
  \enspace.
  \end{equation*}
  In particular, the excess risk satisfies
  \begin{equation*}
    \err(\hyp{t}) - \err(\hopt) \leq
    \sqrt{\frac{2C_0\log(\numex{t}+1)}{\numex{t}}} +
    \frac{2C_0\log(\numex{t}+1)}{\numex{t}}
  \enspace.
  \end{equation*}
\end{theorem}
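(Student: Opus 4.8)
The plan is to isolate one uniform deviation inequality, observe that the rest of the theorem is a one‑line consequence of it, and then check that the delays do not affect its proof. The inequality I would establish is: with probability at least $1-\delta$, simultaneously for all $t\ge 1$ and all $h\in\H$,
\[
  \bigl(\err(h)-\err(\hopt)\bigr)-\bigl(\err(h,S_t)-\err(\hopt,S_t)\bigr)\;\le\;\sqrt{\frac{2C_0\log(\numex t+1)}{\numex t}}+\frac{2C_0\log(\numex t+1)}{\numex t}\;=:\;\Delta_t .
\]
Granting this, I would instantiate it at $h=\hyp t$ (legitimate since it holds uniformly over $\H$) and rearrange to obtain the upper bound of the first display; the lower bound $0\le\err(\hyp t)-\err(\hopt)$ is immediate because $\hopt$ is a minimizer of the true risk. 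For the second display, $\hyp t$ minimizes the empirical importance‑weighted error on $S_t$, so $\err(\hyp t,S_t)-\err(\hopt,S_t)\le 0$ and that term may simply be dropped from the first display.

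To prove the deviation inequality I would follow the martingale argument of Beygelzimer et al.~\cite{BeygelzimerHLZ10}, working with the increments $W_s(h)=\frac{\query s}{\queryp s}\bigl(\ind{h(\ex s)\ne\lab s}-\ind{\hopt(\ex s)\ne\lab s}\bigr)$, whose running average over $s=1,\dots,\numex t$ is exactly $\err(h,S_t)-\err(\hopt,S_t)$. Let $\mathcal F_{s-1}$ denote the information generated by the first $s-1$ steps. The only place the delay could bite is the conditional‑mean identity $\E[W_s(h)\mid\mathcal F_{s-1}]=\err(h)-\err(\hopt)$, which requires $\queryp s$ to be measurable with respect to $\sigma(\mathcal F_{s-1},\ex s)$. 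This still holds: $\queryp s$ is a deterministic function of $\ex s$ and of $\hyp s,\ohyp s$, and the latter are computed from $S_s$, which by construction contains only examples of index at most $\numex s=s-\delay(s)\le s-1$; hence $S_s$, and therefore $\queryp s$, is determined by $\ex s$ together with the data from steps $1,\dots,s-1$. The conditional‑variance and range estimates for $W_s(h)$ are therefore identical to those in the undelayed analysis, and applying Freedman's inequality followed by a union bound — over $h\in\H$ (folded into $C_0=\order(\log(|\H|/\delta))$) and over the sample length $n=1,2,\dots$ (contributing the $\log(\numex t+1)$ factor) — yields the bound above, with $\numex t$ playing the role of the sample size.

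The real content — and the step I expect to be the main obstacle — is obtaining the \emph{clean} form of $\Delta_t$, with no residual dependence on the smallest query probability used. As in \cite{BeygelzimerHLZ10}, one first proves a deviation bound whose extra slack, contributed by the steps at which the competing hypothesis disagrees with the running empirical minimizer, is governed by quantities of exactly the form appearing on the right‑hand side of \eqref{eqn:iwal-queryp}; the query‑probability rule is then precisely the choice of $\queryp t$ that, when the competitor is taken to be $\hopt$, makes this slack collapse to $\Delta_t$. What I would verify with care is that this calibration survives the delay: Algorithm~\ref{alg:delay-iwal} consistently uses the effective count $\numex s$ both inside \eqref{eqn:iwal-queryp} and as the normalization of $\err(\cdot,S_s)$, so the query probabilities stay matched to the very sample on which they are computed, and the argument of \cite{BeygelzimerHLZ10} carries over with the running index replaced throughout by $\numex s$. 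Everything else is the bookkeeping already done in \cite{BeygelzimerHLZ10}.
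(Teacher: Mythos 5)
Your proposal is correct and follows essentially the same route as the paper: reduce the theorem to the uniform deviation inequality of Beygelzimer et al., observe that the delay only threatens the measurability of $\queryp{s}$ (which is preserved because $S_s$ is built from indices at most $s-\delay(s)\le s-1$), and check that the calibration of the query probabilities in \eqref{eqn:iwal-queryp} still collapses the $P_{\min}$-dependent slack to the clean $\Delta_t$ once the running index is replaced by $\numex{t}$ throughout; this is exactly how the paper combines its Lemma~\ref{lemma:beygel} with the Theorem~2 argument of \cite{BeygelzimerHLZ10}. The one step you wave through is the range control needed for Freedman's inequality: the increments $W_s(h)$ are bounded by $1/\queryp{s}$, so one must re-establish the inductive lower bound $p(\samp{\numex{t}},x)\ge 1/\numex{t+1}^{\numex{t+1}}$ in the delayed setting --- this is a hypothesis of the deviation lemma, not a consequence of the measurability argument, and the paper devotes a separate induction to it (Lemma~\ref{lemma:plowerbound}), with a case split on whether $\numex{t+1}=\numex{t}$ or $\numex{t+1}\ge\numex{t}+1$. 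That induction does go through, so this is an elided verification rather than a flaw in your plan.
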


It is easily seen that the theorem matches the previous case of
standard active learning by setting $\delay(t) \equiv 1$ for all $t
\geq 1$. More interestingly, suppose the delays are bounded by
$B$. Then it is easy to see that $\numex{t} = t - \delay(t) \geq t -
B$. Hence we obtain the following corollary in this special case with
probability at least $1 - \delta$
\begin{equation}
    \label{eqn:bounded-det}
    \err(\hyp{t}) - \err(\hopt) \leq \sqrt{\frac{2C_0\log(t - B +
        1)}{t - B}} + \frac{2C_0\log(t - B + 1)}{t - B}
    \enspace.
\end{equation}
As an example, the bounded delay scenario corresponds to a setting
where we go over examples in batches of size $B$, updating the model
after we have collected query candidates over a full batch. In this
case, the delay at an example is at most $B$.

It is also easy to consider the setting of random delays that are
bounded with high probability. Specifically, assume that we
have a random delay process that satisfies:
\begin{equation}
  \label{eqn:delay-rand}
  \P\Parens{\max_{1 \leq s \leq t} \delay(s) > B_t} \leq \delta
  \enspace,
\end{equation}
for some constant $0 < B_t < \infty$. Then it is easy to see that
with probability at least $1 - 2\delta$,
\begin{equation}
  \label{eqn:bounded-rand}
  \err(\hyp{t}) - \err(\hopt) \leq \sqrt{\frac{2C_0\log(t - B_t +
      1)}{t - B_t}} + \frac{2C_0\log(t - B_t + 1)}{t - B_t}
  \enspace.
\end{equation}
Of course, it is conceivable that tighter bounds can be obtained by
considering the precise distribution of delays rather than just a high
probability upper bound.

\subsection{Label complexity}

We next analyze the query complexity. Again, results
of~\cite{BeygelzimerHLZ10} can be adapted to the delayed setting.
Before stating the label complexity bound we
need to introduce the notion of \emph{disagreement
  coefficient}~\cite{Hanneke07} of a hypothesis space $\H$ under a
data distribution $\mathcal{D}$ which characterizes the feasibility of
active learning.  The disagreement coefficient $\theta = \theta(\hopt,
\H, \mathcal{D})$ is defined as
\begin{equation*}
  \theta(\hopt, \H, \mathcal{D}) := \sup\left\{ \frac{\P(X \in
    \mbox{DIS}(\hopt,r))}{r} ~:~ r > 0 \right\},
\end{equation*}
where
\begin{equation*}
  \mbox{DIS}(\hopt, r) := \left\{x \in \mathcal{X}~:~ \exists \hplain
  \in \H~\mbox{such that}~\P(\hopt(X) \ne \hplain(X)) \leq
  r~\mbox{and}~\hopt(x) \ne \hplain(x) \right\}.
\end{equation*}

The following theorem bounds the query complexity of
Algorithm~\ref{alg:delay-iwal}.  It is a consequence of
Lemma~\ref{lemma:label-delay} in Appendix~\ref{app:label:complexity}
(based on a similar result of~\cite{BeygelzimerHLZ10}):
\begin{theorem}
  With probability at least $1 - \delta$, the expected number of label
  queries by Algorithm~\ref{alg:delay-iwal} after $t$ iterations is at
  most
  \begin{equation*}
    1 + 2\theta\, \err(\hopt) \cdot \numex{t} + \order\left(
    \theta\, \sum_{s=1}^t
    \left(\sqrt{\frac{C_0\log(\numex{s}+1)}{\numex{s}}} +
    \frac{C_0\log(\numex{s}+1)}{\numex{s}}\right) \right).
  \end{equation*}
\end{theorem}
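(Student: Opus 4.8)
The proof will mirror the label-complexity argument for IWAL of Beygelzimer et al.~\cite{BeygelzimerHLZ10}, carrying the effective sample size $\numex{s}=s-\delay(s)$ through every step. Since $\queryp{s}$ is determined by $S_s$ (frozen by the end of step $s-\delay(s)\le s-1$) together with $\ex{s}$, it equals the conditional probability that $\lab{s}$ is queried, so the expected number of queries after $t$ rounds is exactly $\sum_{s=1}^{t}\E\!\left[\queryp{s}\right]$, and it suffices to bound each term. The first ingredient is the deterministic content of the sampling rule: writing $\alpha_s=\sqrt{C_0\log(\numex{s}+1)/\numex{s}}$ and $\beta_s=C_0\log(\numex{s}+1)/\numex{s}$, one checks from \eqref{eqn:iwal-queryp} that $\queryp{s}$ is non-increasing in $G_s$, equals $1$ precisely when $G_s\le\alpha_s+\beta_s$, and otherwise obeys $\queryp{s}=\order\!\left(\alpha_s^{2}/G_s^{2}\right)$ up to a lower-order $\order(\beta_s/G_s)$ term; thus a lower bound on $G_s$ translates into an upper bound on $\queryp{s}$.

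Next I would pass to true errors on the high-probability event of Theorem~\ref{thm:generalization}, which is the step where the delay has to be handled with care. The key observation is that the importance-weighted summands $\frac{\query{s}}{\queryp{s}}\ind{\hplain(\ex{s})\ne\lab{s}}$ still form a martingale-difference sequence in the natural filtration even with stale hypotheses, because $\queryp{s}$ is still $\mathcal F_{s-1}$-measurable; hence the Freedman-type deviation inequalities underlying Theorem~\ref{thm:generalization}, together with the usual induction keeping $\queryp{s}$ bounded away from $0$ (and so controlling the range and variance of the weights $1/\queryp{s}$), carry over essentially verbatim, uniformly over $s\le t$, with $\numex{s}$ replacing $s$. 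On that event, since $\hyp{s}$ minimizes the empirical importance-weighted risk and both errors concentrate around their means uniformly over $\H$, the empirical gap satisfies $G_s\ge\err(\ohyp{s})-\err(\hyp{s})-\order(\alpha_s+\beta_s)$.

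The geometric step is the disagreement-coefficient reduction of Hanneke~\cite{Hanneke07}. By construction $\ohyp{s}(\ex{s})\ne\hyp{s}(\ex{s})$, while Theorem~\ref{thm:generalization} gives $\P(\hyp{s}(X)\ne\hopt(X))\le\err(\hyp{s})+\err(\hopt)\le 2\err(\hopt)+\order(\alpha_s+\beta_s)$. Hence, for a radius $r_s=2\err(\hopt)+c(\alpha_s+\beta_s)$ with a suitable absolute constant $c$, whenever $\ex{s}\notin\mbox{DIS}(\hopt,r_s)$ one has $\hyp{s}(\ex{s})=\hopt(\ex{s})$, every competitor defining $\ohyp{s}$ lies strictly outside the $r_s$-ball around $\hopt$, so $\err(\ohyp{s})\ge r_s-\err(\hopt)$ (using $\P(h(X)\ne\hopt(X))\le\err(h)+\err(\hopt)$); plugging this into the lower bound on $G_s$ forces $G_s$ to exceed the threshold by a definite margin, and therefore $\queryp{s}$ to be small. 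Taking the contrapositive, on the good event $\E[\queryp{s}]\le\P(\ex{s}\in\mbox{DIS}(\hopt,r_s))+\order(\alpha_s+\beta_s)$, and the disagreement coefficient gives $\P(\ex{s}\in\mbox{DIS}(\hopt,r_s))\le\theta r_s=2\theta\,\err(\hopt)+\order\big(\theta(\alpha_s+\beta_s)\big)$.

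Summing over $s=1,\dots,t$, absorbing the $\delta$ failure probability and the handful of initial rounds with $\numex{s}$ too small into the $1-\delta$ guarantee and the additive $1$, and tracking the effective sample sizes exactly as in~\cite{BeygelzimerHLZ10} so that the leading constant ends up multiplying $\numex{t}$ rather than $t$, yields
\[
  \sum_{s=1}^{t}\E[\queryp{s}]\;\le\;1+2\theta\,\err(\hopt)\,\numex{t}
  +\order\!\left(\theta\sum_{s=1}^{t}\left(\sqrt{\frac{C_0\log(\numex{s}+1)}{\numex{s}}}+\frac{C_0\log(\numex{s}+1)}{\numex{s}}\right)\right),
\]
which is the stated bound. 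The one genuinely delicate point is re-establishing the importance-weighted concentration with delayed models; I expect it to be routine once one checks that staleness of the hypothesis does not disturb the martingale structure on which those bounds rest.
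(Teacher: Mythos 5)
Your proposal is correct and follows essentially the same route as the paper: the paper obtains the theorem by summing the per-round bound of Lemma~\ref{lemma:label-delay}, whose proof it defers to Beygelzimer et al.\ after checking that the delay only replaces $n$ by $\numex{s}$ throughout (the martingale/measurability point and the disagreement-coefficient reduction you spell out are exactly the ingredients being ported). The one step you acknowledge glossing over---why the leading term multiplies $\numex{t}$ rather than $t$ after summation---is treated no more explicitly in the paper itself.
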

Once again, we can obtain direct corollaries in the case of
deterministic and random bounded delays. In the case of delays bounded
determinsitically by $B$, we obtain the natural result that with the
probability at least $1 - \delta$, the query complexity of
Algorithm~\ref{alg:delay-iwal} is at most
\begin{equation*}
  B + 2\theta\, \err(\hopt) \cdot(t-1) + \order\left( \theta \sqrt{t-B}\,
  \sqrt{C_0\log(t)} + \theta \,
  C_0\log(t)\right) .
\end{equation*}

For a random delay process satisfying~\eqref{eqn:delay-rand} the query
complexity is bounded with probability at least $1-2\delta$ by
\begin{equation*}
  B_t + 2\theta\, \err(\hopt) \cdot(t-1) + \order\left( \theta
  \sqrt{t-B_t}\, \sqrt{C_0\log(t)} + \theta \, C_0\log(t) \right).
\end{equation*}

\section{Experiments}
\label{sec:experiments}

In this section we carry out an empirical evaluation of
Algorithm~\ref{alg:pal-synch}.

\textbf{Dataset~~~} In order to experiment with sufficiently large
number of training examples, we report results using the dataset
developed by Loosli et al.~\cite{loosli-canu-bottou-2006}. Each
example in this dataset is a $28\times28$ image generated by
applying elastic deformations to the MNIST training examples.
The first 8.1 million examples of this dataset, henceforth MNIST8M,
are available online.\footnote{\small
  \url{http://leon.bottou.org/publications/djvu/loosli-2006.djvu}}

\textbf{Active sifting~~~}
Our active learning used margin-based
querying~\cite{TongK01,BalcanBZ07}, which is
applicable to classifiers producing real-valued scores $f(x)$
whose sign predicts the target class. Larger absolute values (larger
margins) correspond to larger confidence. A training point $x$ is
queried with probability:
\begin{equation}
  p = \frac{2}{1 + \exp(\eta\, |f(x)|\sqrt{n})}~,
  \label{eqn:actual-queryp}
\end{equation}
where $n$ is the total number of examples
seen so far (including those not selected by the active learner). In
parallel active learning, $n$ is the cumulative number of examples
seen by the cluster until the beginning of the latest sift phase.
The motivation behind this strategy is that in low-noise settings, we
expect the uncertainty in our predictions to shrink at a rate
$O(1/\sqrt{n})$ (or more generally $O(\theta + 1/\sqrt{n})$ if $\theta$
is the Bayes risk). Hence we aim to select examples where we have
uncertainty in our predictions, with the aggressiveness of our
strategy modulated by the constant $\eta$.

\textbf{Parallel simulation~~~} In our experiments we simulate the
performance of Algorithm~\ref{alg:pal-synch} deployed in a parallel
environment. The algorithm is warmstarted with a model trained on a small
subset of examples. We split a global batch into portions
of $B/k$ and simulate the sifting phase of each node in turn.
The queries collected across all nodes in one
round are then used to update the model. We measure the time elapsed
in the sifting phase and use the largest time across all $k$
nodes for each round. We also add the model updating time in each
round and the initial warmstart time. This simulation
ignores communication overhead. However, because of the batched
processing, which allows pipelined broadcasts of all queried examples,
we expect that the communication will be dominated by sifting and updating
times.

\textbf{Support vector machine~~~} The first learning algorithm we
implemented in our framework is kernel SVMs with an RBF kernel. The
kernel was applied to pixel vectors, transformed to lie in $[-1,1]$
following Loosli et al.~\cite{loosli-canu-bottou-2006}. For passive
learning of SVMs, we used the LASVM algorithm of Bordes et
al.~\cite{bordes-2005} with 2 reprocess steps after each new datapoint
to minimize the standard SVM objective in an online fashion.
The algorithm was previously successfully
successfully used on the MNIST8M data, albeit with a different active learning
strategy~\cite{bordes-2005}. The
algorithm was modified to handle importance-weighted queries.

For active learning, we obtain the query probabilities $p$ from the
rule~\eqref{eqn:actual-queryp}, which is then used to obtain
importance weighted examples to pass to LASVM. The importance weight
on an example corresponds to a scaling on the upper bound of the
box constraint of the corresponding dual parameter
and yields $\alpha_i \in [0,C/p]$ instead of the usual $\alpha_i\in [0,C]$
where $C$ is the trade-off parameter for SVMs.
We found that
a very large importance weight can cause instability with the LASVM
update rule, and hence we constrained the change in
$\alpha_i$ for any example $i$ during a process or a reprocess step to
be at most $C$.  This alteration potentially slows the optimization but leaves the
objective unchanged.

We now present our evaluation on the task of distinguishing between
the pair of digits $\{3,1\}$ from the pair $\{5,7\}$. This is expected
to be a hard problem. We set the global batch size to nearly 4\,000 examples,
and the initial warmstart of Algorithm~\ref{alg:pal-synch} is also trained on
approximately 4K examples. The errors reported are MNIST test errors out of a test set of
4065 examples for this task. For all the variants, we use the
SVM trade-off parameter $C = 1$. The kernel bandwidth is set to
$\gamma = 0.012$, where $K(x,y) = \exp(-\gamma\|x-y\|_2^2)$. We ran
three variants of the algorithm: sequential passive, sequential active
and parallel active with a varying number of nodes. For sequential
active learning, we used $\eta = 0.01$ in the
rule~\eqref{eqn:actual-queryp} which led to the best performance,
while we used a more aggressive $\eta = 0.1$ in the parallel setup.

Figure~\ref{fig:error-time} (left) shows how the test error
of these variants decreases as a function of running time.
The running times were measured for the
parallel approach as described earlier.  At a high level, we observe
that the parallel approach shows impressive gains over both sequential
active and passive learning. In fact, we observe in this case that
sequential active learning does not provide substantial speedups over
sequential passive learning, when one aims for a high accuracy, but
the parallel approach enjoys impressive speedups up to 64 nodes. In
order to study the effect of delayed updates from
Section~\ref{sec:analysis}, we also ran the ``parallel
simulation'' for $k=1$, which corresponds to active learning
with updates performed after batches of $B$ examples.
Somewhat surprisingly, this outperformed the
strategy of updating at each example, at least for high accuracies.

\begin{figure}[t]
  \center
  \begin{tabular}{cc}
    \includegraphics[scale=0.35]{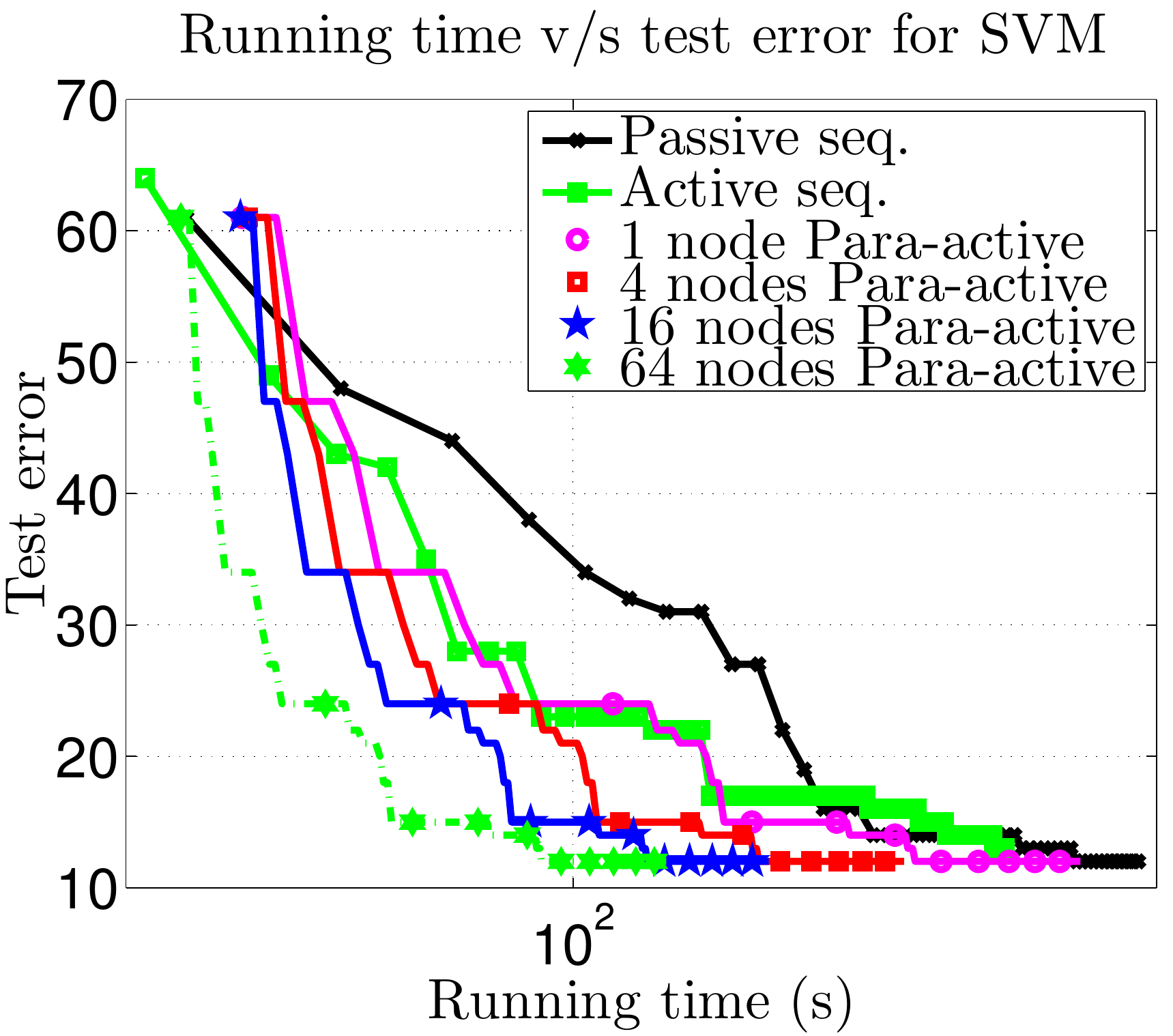} &
    \includegraphics[scale=0.35]{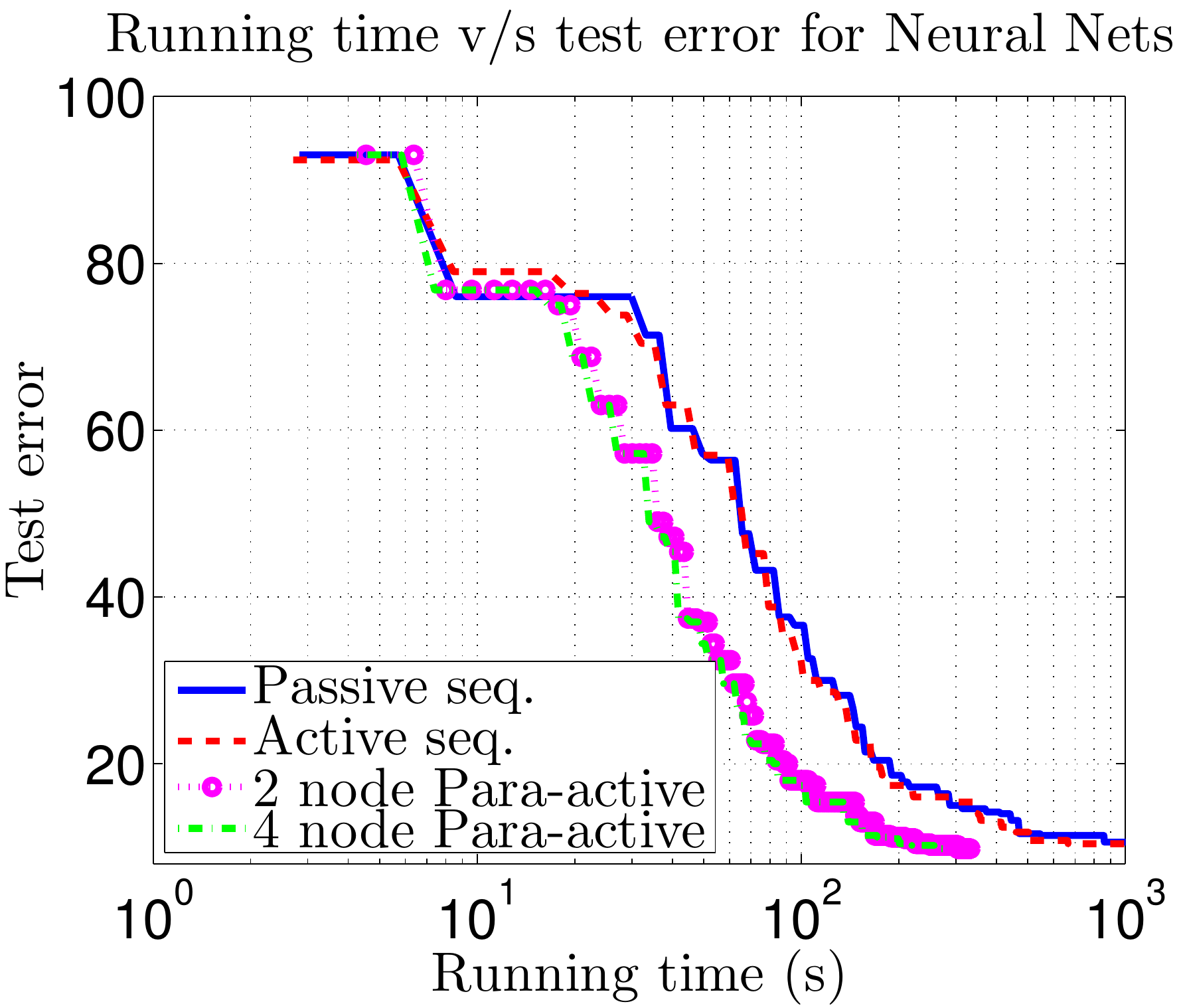}
  \end{tabular}
  \caption{Training time versus test error for passive,
    active, and parallel active learning.}
  \label{fig:error-time}
\end{figure}

To better visualize the gains of parallelization, we plot the speedups
of our parallel implementation over passive learning, and single node
active learning with batch-delayed updates (since that performed
better than updating at each example). The results are shown in
Figure~\ref{fig:svm-speedup}. We show the speedups at several
different levels of test errors (out of 4065 test examples). Observe
that the speedups increase as we get to smaller test errors, which is
expected since the SVM model becomes larger over time (increasing the
cost of active filtering) and the sampling rate decreases.
We obtain substantial speedups until 64
nodes, but they diminish in going from 64 to 128 nodes. This is
consistent with our high-level reasoning of
Figure~\ref{fig-informal}. On this dataset, we found a subsampling
rate of about $2\%$ for our querying strategy which implies that
parallelization over ~50 nodes is ideal.

\begin{figure}[t]
  \center
  \begin{tabular}{cc}
    \includegraphics[scale=0.35]{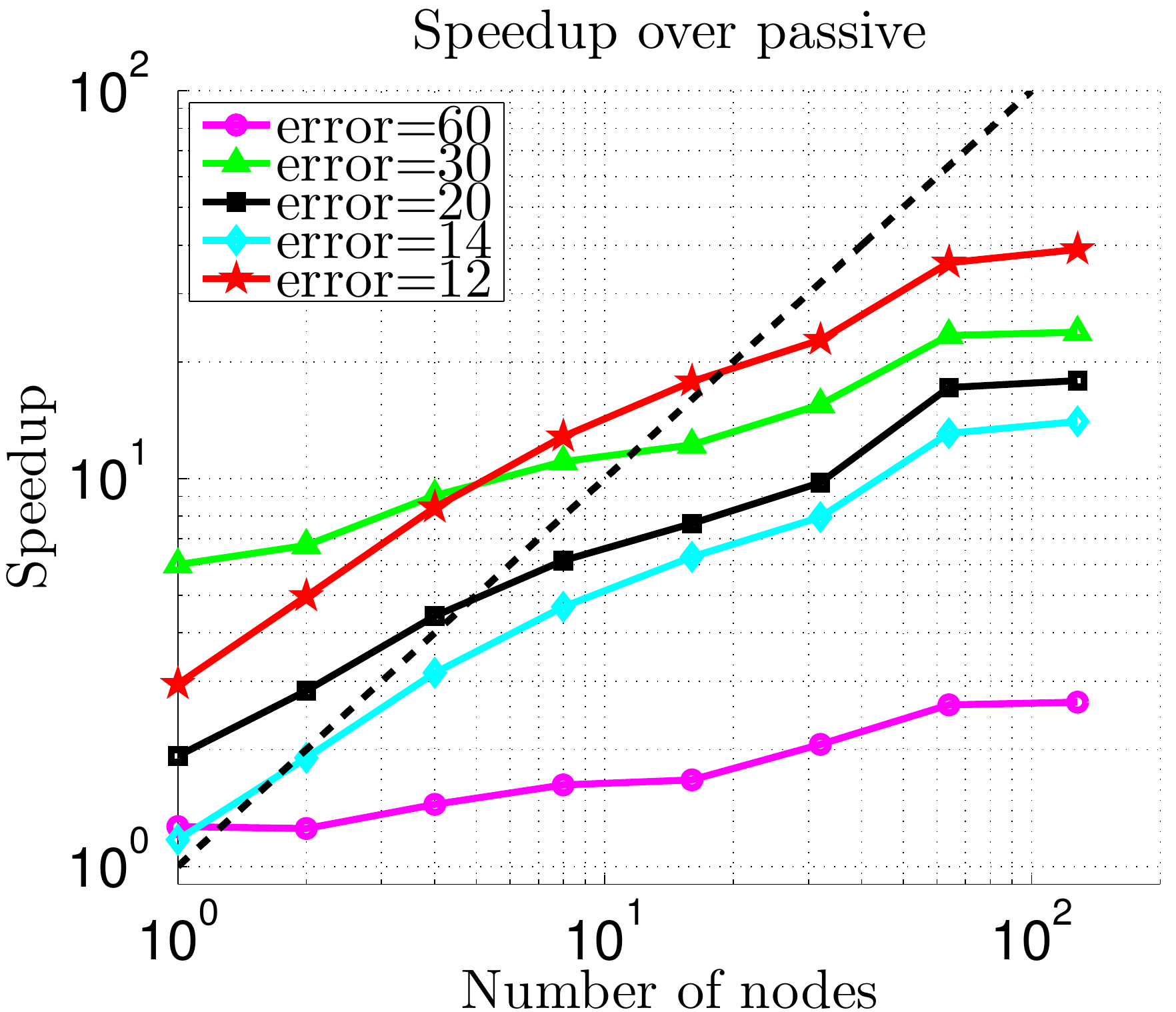} &
    \includegraphics[scale=0.35]{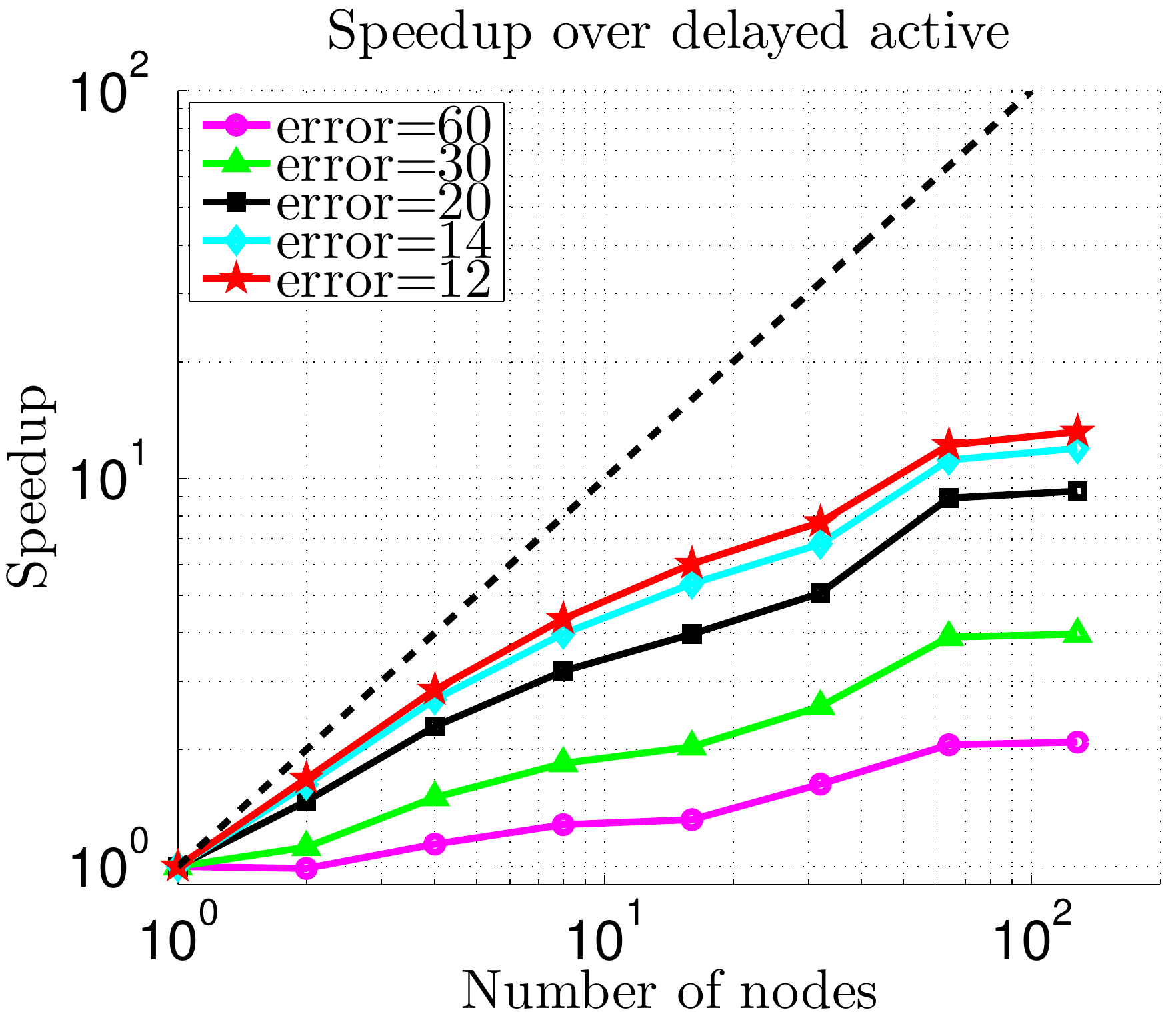}
  \end{tabular}
  \caption{Speedups of parallel active learning over
    passive learning (left) and active learning (right).}
  \label{fig:svm-speedup}
\end{figure}

\textbf{Neural network~~~} With the goal of demonstrating that our
parallel active learning approach can be applied to non-convex problem
classes as well, we considered the example of neural networks with one
hidden layer. We implemented a neural network with 100 hidden nodes,
using sigmoidal activation on the hidden nodes. We used a linear
activation and logistic loss at the output node. The inputs to the
network were raw pixel features, scaled to lie in $[0,1]$. The
classification task used in this case was 3 vs.\ 5. We trained the
neural network using stochastic gradient descent with adaptive
updates~\cite{DuchiHaSi2010, McMahanSt2010}. We used a stepsize of
0.07 in our experiments, with the constant $\eta$ in the
rule~\eqref{eqn:actual-queryp} set to 0.0005. This results in more
samples than the SVM experiments.  Given the modest subsampling rates
(we were still sampling at 40\% when we flattened out at 10 mistakes,
eventually reaching 9 mistakes),
and because the updates are constant-time (and
hence the same cost as filtering), we expect a much less spectacular
performance gain. Indeed, this is reflected in our plots
of Figure~\ref{fig:error-time} (right). While we do see a substantial gain
in going from 1 to 2 nodes,
the gains are modest beyond that as predicted by the 40\% sampling.
A better update rule (which allows
more subsampling) or a better subsampling rule are required for better
performance.



\section{Conclusion}

We have presented a generic strategy to design parallel learning
algorithms by leveraging the ideas and the mathematics of active
learning.  We have shown that this strategy is effective because the
search for informative examples is highly parallelizable and remains
effective when the sifting process relies on slightly outdated models.
This approach is particularly attractive to train nonlinear models
because few effective parallel learning algorithms are available for
such models.  We have presented both theoretical and experimental
results demonstrating that parallel active learning is sound and
effective.  We expect similar gains to hold in practice for all
problems and algorithms for which active learning has been shown
to work.

\bibliographystyle{unsrt}
\bibliography{active_parallel}

\newpage


\appendix

\section{Generalization bounds for delayed IWAL}
\label{app:delayed}

In this section we provide generalization error
analysis of Algorithm~\ref{alg:delay-iwal}, by showing how to adjust
proofs of Beygelzimer et al.~\cite{BeygelzimerHLZ10}.
To simplify notation, we will
use the shorthand $\epsilon_t = C_0\log(t - \delay(t) + 1)/(t -
\delay(t))$. We start by noting that Lemma 1 of~\cite{BeygelzimerHLZ10}
still applies in our case, assuming we can establish the
desired lower bound on the query probabilities. This forms the
starting point of our reasoning.

In order to state the first lemma, we define the additional notation
$\samp{t-\delay(t)}$ to refer to the set of triples $(X_s, Y_s, Q_s)$ for
$s \leq t - \tau(t)$. Here, $X_s$ is feature vector, $Q_s$ is an
indicator of whether the label was queried, and the
label $Y_s$ is only included on the rounds $s$ where a query was
made. These samples summarize the history of the algorithm up to
the time $t - \tau(t)$ and are used to train $\hyp{t}$. Recall that
$\numex{t} = t - \delay(t)$.

In the following we let $g_t = \err(\ohyp{t},\samp{\numex{t}}) -
\err(\hyp{t},\samp{\numex{t}})$ be the error estimated gap between the
preferred hypothesis at timestep $t$ and the best hypothesis choosing
the other label.  We also let $p(\samp{\numex{t}},x)$ be the
probability of sampling a label when $x$ is observed after history
$\samp{\numex{t}}$ is observed.

We start with a direct analogue of Lemma 1 of Beygelzimer et
al.~\cite{BeygelzimerHLZ10}.
\begin{lemma}[Beygelzimer et al.~\cite{BeygelzimerHLZ10}]
  Pick any $\delta \in (0,1)$ and for all $t \geq 1$ define
  \begin{equation}
    \epsilon_t = \frac{16\log(2(3 + \numex{t}\log_2\numex{t})
      \numex{t}(\numex{t}+1)|\H|/\delta)}{\numex{t}} = \order\left(
    \frac{\log(\numex{t}|\H|/\delta)}{\numex{t}}\right).
    \label{eqn:eps-def}
  \end{equation}
  Suppose that the bound $p(\samp{\numex{t}},x) \geq
  1/\numex{t+1}^{\numex{t+1}}$ is satisfied for all
  $(\samp{\numex{t}},x) \in (\mathcal{X}\times \mathcal{Y}\times
  \{0,1\})^{\numex{t}} \times \mathcal{X}$ and all $t \geq 1$. Then
  with probability at least $1-\delta$ we have for all $t \geq 1$ and
  all $\hplain \in \H$,
  \begin{equation}
    |(\err(\hplain, S_t) - \err(\hopt, S_t))
    - (\err(\hplain) - \err(\hopt))| \leq
    \sqrt{\frac{\epsilon_t}{\pmin{i}{\hplain}}} +
    \frac{\epsilon_t}{\pmin{t}{\hplain}},
    \label{eqn:abs-dev}
  \end{equation}
  where $\pmin{t}{\hplain} = \min\{\queryp{s}~:~ 1 \leq s \leq
  \numex{t} \wedge \hplain(\ex{s}) \ne \hopt(\ex{s})\}$.
  \label{lemma:beygel}
\end{lemma}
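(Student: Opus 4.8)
The plan is to deduce this from the corresponding concentration statement of Beygelzimer et al.~\cite{BeygelzimerHLZ10} by observing that the delay changes only \emph{which} prefix of the query stream is used, not its probabilistic structure. Fix $t$ and set $n = \numex{t} = t - \delay(t)$. The triples $(\ex{s},\lab{s},\query{s})$ for $s \le n$ are generated exactly as in standard IWAL run for $n$ rounds, with the single difference that the query probability $\queryp{s}$ is computed from the delayed summary $S_s$ rather than from the full history $\samp{s-1}$. Since $\delay(s)\ge 1$, the set $S_s$ depends only on examples with index at most $s-\delay(s)\le s-1$, so $\hyp{s}$ and $\ohyp{s}$ are $\mathcal{F}_{s-1}$-measurable and $\queryp{s}$ (which also sees $\ex{s}$ through $\ohyp{s}(\ex{s})$ and $\hyp{s}(\ex{s})$) is $\bigl(\mathcal{F}_{s-1}\vee\sigma(\ex{s})\bigr)$-measurable. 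Predictability together with the assumed lower bound $\queryp{s}\ge 1/\numex{s+1}^{\numex{s+1}}$ are the only properties of the query rule used in the proof of Lemma~1 of~\cite{BeygelzimerHLZ10}, so that proof applies with $n$ in place of their sample size, and the claim follows at $n=\numex{t}$, simultaneously for all $t\ge 1$ by the uniform-in-$n$ form of that lemma.

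For completeness I would carry out the martingale argument explicitly. Fix $h\in\H$ and set $V_s = \frac{\query{s}}{\queryp{s}}\bigl(\ind{h(\ex{s})\ne\lab{s}} - \ind{\hopt(\ex{s})\ne\lab{s}}\bigr)$. Since the data are i.i.d.\ and $\E[\query{s}\mid\mathcal{F}_{s-1},\ex{s},\lab{s}] = \queryp{s}$, the centered increments $D_s = V_s - (\err(h)-\err(\hopt))$ form a martingale difference sequence for $\{\mathcal{F}_s\}$, and $\err(h,S_t)-\err(\hopt,S_t) - (\err(h)-\err(\hopt)) = \frac1n\sum_{s=1}^n D_s$. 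The increment $V_s$ is nonzero only when $h(\ex{s})\ne\hopt(\ex{s})$, in which case $|V_s|\le 1/\queryp{s}$ and $\E[V_s^2\mid\mathcal{F}_{s-1},\ex{s}]\le 1/\queryp{s}$; hence on $\{\pmin{t}{h}\ge m\}$ both the range and the summed conditional variance of $D_1,\dots,D_n$ are controlled by $1/m$, up to the additive $O(1)$ from centering. Freedman's inequality then gives a deviation of the \emph{sum} of order $\sqrt{n\log(1/\delta')/m} + \log(1/\delta')/m$, equivalently a deviation of the average of order $\sqrt{\epsilon_t/m} + \epsilon_t/m$ once $\log(1/\delta')$ is taken to be $\Theta(\log(\numex{t}|\H|/\delta))$.

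Since $\pmin{t}{h}$ is itself random, the bound on $\{\pmin{t}{h}\ge m\}$ must be turned into a bound holding for the realized value of $\pmin{t}{h}$. The standard device --- identical to~\cite{BeygelzimerHLZ10} --- is to apply Freedman to the martingale stopped at the first round $s$ on which $h(\ex{s})\ne\hopt(\ex{s})$ and $\queryp{s}<m$, so that the stopped sequence obeys the range and variance bounds unconditionally and agrees with $\sum_{s\le n}D_s$ on $\{\pmin{t}{h}\ge m\}$, and then to union-bound over a dyadic grid of values $m$, which is finite because $\queryp{s}\ge 1/\numex{s+1}^{\numex{s+1}}$; this grid contributes the $\log(3+\numex{t}\log_2\numex{t})$ factor inside $\epsilon_t$. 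A further union bound over $h\in\H$ and over all time indices up to $t$ contributes the $\log|\H|$ and $\log(\numex{t}(\numex{t}+1))$ factors, giving the stated $\epsilon_t$ and inequality \eqref{eqn:abs-dev}. The only step needing genuine care is this peeling over the random $\pmin{t}{h}$, combined with the (easy but necessary) check that every quantity remains adapted under delayed updates; everything else transcribes the non-delayed argument under the substitution $n\mapsto\numex{t}$.
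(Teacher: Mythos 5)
Your proposal is correct and follows essentially the same route as the paper, which imports this lemma from Beygelzimer et al.\ without proof, relying implicitly on exactly the two facts you verify: that the delayed query probabilities remain predictable (indeed, they depend on an even older prefix of the history, so the martingale structure is intact), and that the assumed lower bound $p(\samp{\numex{t}},x)\ge 1/\numex{t+1}^{\numex{t+1}}$ supplies the only other property of the query rule the original argument needs. Your explicit Freedman-plus-peeling sketch is a faithful transcription of the cited proof under the substitution $n\mapsto\numex{t}$ and adds detail the paper omits, but it is not a different approach.
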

In order to apply the lemma, we need the following analogue of Lemma 2
of~\cite{BeygelzimerHLZ10}.
\begin{lemma}
  The rejection threshold of Algorithm~\ref{alg:delay-iwal} satisfies
  $p(\samp{\numex{t}},x) \geq 1/{\numex{t+1}}^{\numex{t+1}}$ for all
  $t \geq 1$ and all $(\samp{\numex{t}},x) \in (\mathcal{X}\times
  \mathcal{Y}\times\{0,1\})^{\numex{t}} \times \mathcal{X}$.
  \label{lemma:plowerbound}
\end{lemma}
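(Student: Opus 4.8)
The plan is to mirror the proof of Lemma 2 of Beygelzimer et al.~\cite{BeygelzimerHLZ10}, tracking how the delay enters. The query probability $\queryp{t}$ is either $1$ (in which case the bound is trivial since $\numex{t+1}\ge 1$) or it is the solution $s\in(0,1)$ of equation~\eqref{eqn:iwal-queryp}. So I only need to lower-bound that solution. Write $\epsilon = C_0\log(\numex{t}+1)/\numex{t}$ for brevity (matching the shorthand in Appendix~\ref{app:delayed}). The right-hand side of~\eqref{eqn:iwal-queryp}, viewed as a function of $s$, is decreasing in $s$ on $(0,1)$, tends to $+\infty$ as $s\to 0^+$, and equals $\sqrt{\epsilon}+\epsilon$ at $s=1$; since we are in the branch where $G_t > \sqrt{\epsilon}+\epsilon$, a unique root $s\in(0,1)$ exists. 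To lower-bound it, I will use that $G_t$ itself is bounded above: $G_t = \err(\hyp{t},S_t)-\err(\ohyp{t},S_t)\le 1$ because both empirical importance-weighted errors lie in $[0,1]$ — here I should double-check that the importance-weighted empirical error is bounded by $1$, which holds because each term $\frac{\query{s}}{\queryp{s}}\ind{\cdot}$ has conditional expectation at most $1$ and, more to the point for a deterministic bound, because $\hyp{t}$ is the \emph{minimizer}, so $\err(\hyp{t},S_t)\le\err(\ohyp{t},S_t)$ would give $G_t\le 0$; more carefully $G_t \le \err(\hyp{t},S_t) \le 1$ uniformly since $\queryp s \ge$ (previous bound) makes each importance weight at most $\numex{s}^{\numex{s}}$... this is exactly the inductive structure I need to be careful about.

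Concretely, the argument is inductive in $t$. Assume $p(\samp{\numex{s}},x)\ge 1/\numex{s+1}^{\numex{s+1}}$ for all $s<t$; this lower bound on past query probabilities gives an \emph{upper} bound on past importance weights $\query s/\queryp s \le \numex{s}^{\numex{s}}\le \numex{t}^{\numex{t}}$, hence a crude bound $G_t \le \err(\hyp t, S_t) \le \numex{t}^{\numex{t}}$ (in fact $\le 1$ with a slightly better argument, but the crude bound suffices). Plugging $G_t \le \numex{t}^{\numex{t}}$ into~\eqref{eqn:iwal-queryp} and solving: since the dominant term on the right for small $s$ is $\frac{c_1}{\sqrt s}\sqrt\epsilon$ or $\frac{c_2}{s}\epsilon$, one gets $s \ge \min\{(\text{const})\cdot\epsilon/G_t^2,\ (\text{const})\cdot\epsilon/G_t\}$ up to the $\sqrt\epsilon,\epsilon$ additive pieces; combining with $G_t\le\numex{t}^{\numex{t}}$ and $\epsilon = C_0\log(\numex{t}+1)/\numex{t} \ge C_0/\numex{t}$, this yields $s \ge 1/\numex{t}^{O(\numex{t})}$, and then verifying that the constants line up so that the exponent is at most $\numex{t+1}$ against the base $\numex{t+1}$. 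The cleanest route is to note $\numex{t+1}\ge\numex{t}$ whenever $\delay(t+1)\le\delay(t)+1$, which holds for the delay processes considered (and in general $\numex{t+1}=t+1-\delay(t+1)$ can be related to $\numex t$ under the mild monotonicity assumptions on $\delay$); I should state explicitly which assumption on $\delay$ I am using, e.g.\ that $\delay(t+1)\le\delay(t)+1$, which is automatic since at most one new example arrives per step.

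The main obstacle I anticipate is bookkeeping rather than conceptual: the original Lemma 2 uses $n$ (the number of examples seen) both as the number of terms in the empirical error and as the time index, whereas here these decouple into $\numex{t}=t-\delay(t)$ versus $t$. I must consistently use $\numex{t}$ wherever Beygelzimer et al.\ use $n$ inside the error estimate and the threshold, while the recursion "unrolls" over $t$. The one place this genuinely matters is ensuring the target exponent $\numex{t+1}^{\numex{t+1}}$ (not $\numex{t}^{\numex{t}}$ or $t^t$) comes out of the algebra, which is why the inequality $\numex{t+1}\ge\numex{t}$ and the choice of $\numex{\cdot}$ rather than $t$ inside the threshold in Algorithm~\ref{alg:delay-iwal} are exactly what make the induction close. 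Once that is pinned down, the remaining inequalities are the same elementary manipulations of~\eqref{eqn:iwal-queryp} as in~\cite{BeygelzimerHLZ10}, so I would present them compactly and refer the reader there for the purely arithmetic parts.
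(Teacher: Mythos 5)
Your plan follows essentially the same route as the paper's proof: induction on $t$, using the inductive lower bound on past query probabilities to obtain the crude upper bound $g_t \le 2\numex{t}^{\numex{t}}$ on the empirical gap, and then solving~\eqref{eqn:iwal-queryp} in the nontrivial branch to get $p_t > c_2\epsilon_t/(c_1 g_t)$. Up to constants this is exactly the paper's argument, and you correctly locate the one delicate point: making the exponent come out as $\numex{t+1}^{\numex{t+1}}$.

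However, the way you propose to close that last step has a genuine gap. The chain of inequalities yields $p_t > 1/(e\,\numex{t}^{\numex{t}+1})$, so to conclude $p_t \ge 1/\numex{t+1}^{\numex{t+1}}$ you need $e\,\numex{t}^{\numex{t}+1} \le \numex{t+1}^{\numex{t+1}}$. This holds when $\numex{t+1} \ge \numex{t}+1$, since then $\numex{t}^{\numex{t+1}} = \numex{t+1}^{\numex{t+1}}\Parens{\numex{t}/\numex{t+1}}^{\numex{t+1}} \le \numex{t+1}^{\numex{t+1}}\,e^{-1}$; but it is \emph{false} when $\numex{t+1}=\numex{t}$ (it would require $e\,\numex{t}\le 1$). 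Your proposal only establishes the weak inequality $\numex{t+1}\ge\numex{t}$ (which is all that $\delay(t+1)\le\delay(t)+1$ gives), and the case $\numex{t+1}=\numex{t}$ --- the delay grows and no new labeled example becomes available --- is not a corner case here: it is precisely the regime this delayed variant is designed for, and it can persist over many consecutive steps. The fix, which the paper makes explicit at the start of its proof, is a case split: when $\numex{t+1}=\numex{t}$ the sample set and hence the threshold computation are unchanged, so the claim reduces directly to the inductive hypothesis; the arithmetic involving the factor $e$ is only invoked, and only works, when $\numex{t+1}\ge\numex{t}+1$. A further small point: your aside that $G_t\le 1$ holds ``with a slightly better argument'' is not correct --- the importance-weighted empirical errors are not deterministically bounded by $1$ --- but this is harmless since, as you note, the crude bound $\numex{t}^{\numex{t}}$ is all the argument needs.
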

\begin{proof}
  The proof is identical to that of~\cite{BeygelzimerHLZ10},
  essentially up to replacing $n$ with appropriate values of
  $\numex{t}$. We proceed by induction like their lemma. The claim for
  $t = 1$ is trivial since the $p(\emptyset,x) = 1$. Now we assume the
  inductive hypothesis that $p(\samp{\numex{s}},x) \geq
  1/\numex{s+1}^{\numex{s+1}}$ for all $s \leq t-1$.

  Note that we can assume that $\numex{t+1} \geq \numex{t} + 1$. If
  not, then $\numex{t+1} = \numex{t}$ and the claim at time $t$
  follows from the inductive hypothesis. If not, then the probability
  $p(\samp{\numex{t}},x)$ for any $x$ is based on the error difference
  $g_t$. Following their argument and the definition of
  Algorithm~\ref{alg:delay-iwal}, one needs to only worry about the
  case where $g_{t} > \sqrt{\epsilon_{t}} +
  \epsilon_{t}$. Furthermore, by the inductive hypothesis we have the
  upper bound $g_t \leq 2(\numex{t})^{\numex{t}}$. Mimicking their
  argument from hereon results in the following lower bound on the
  query probability $p_{i,j}$
  \[
    \sqrt{p_{t}}
    ~~>~~ \sqrt{\frac{c_2\epsilon_{t}}{c_1g_{t}}}
    ~~=~~ \sqrt{\frac{c_2\log(\numex{t}+1)}{c_1\, \numex{t}g_{t}}}
    ~~\geq~~ \sqrt{\frac{c_2\log(\numex{t}+1)}{2c_1\, \numex{t} \numex{t}^{\numex{t}}}}
    ~~>~~ \sqrt{\frac{1}{e\, \numex{t}^{\numex{t}+1}}}
  \enspace.
  \]

  Recall our earlier condition that $\numex{t+1} \geq
  \numex{t}+1$. Hence we have
  \begin{equation*}
    \numex{t}^{\numex{t}+1}
    ~\leq~~ \numex{t}^{\numex{t+1}}
    ~=~~ \numex{t+1}^{\numex{t+1}} \, \left(
    \frac{\numex{t}}{\numex{t+1}}\right)^{\numex{t+1}}
    ~\leq~~ \numex{t+1}^{\numex{t+1}} \, \left(
    \frac{\numex{t+1}-1}{\numex{t+1}}\right)^{\numex{t+1}}
    ~\leq~~ \frac{\numex{t+1}^{\numex{t+1}}}{e}.
  \end{equation*}
  Combining the above two results yields the statement of the lemma.
\end{proof}
Combining the two lemmas yields Theorem~\ref{thm:generalization},
a natural generalization of the result of~\cite{BeygelzimerHLZ10}.
%
\begin{theoremGen}
  For each time $t \geq 1$, with probability at least $1-\delta$ we have
  \begin{equation*}
    0 \leq \err(\hyp{t}) - \err(\hopt) \leq
    \err(\hyp{t},S_t) - \err(\hopt, S_t)
    + \sqrt{\frac{2C_0\log(\numex{t}+1)}{\numex{t}}} +
    \frac{2C_0\log(\numex{t}+1)}{\numex{t}}
  \enspace.
  \end{equation*}
  In particular, the excess risk satisfies
  \begin{equation*}
    \err(\hyp{t}) - \err(\hopt) \leq
    \sqrt{\frac{2C_0\log(\numex{t}+1)}{\numex{t}}} +
    \frac{2C_0\log(\numex{t}+1)}{\numex{t}}
  \enspace.
  \end{equation*}
\end{theoremGen}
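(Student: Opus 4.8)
The plan is to follow the proof of the corresponding generalization theorem of Beygelzimer et al.~\cite{BeygelzimerHLZ10}, performing the substitution $n \mapsto \numex{t} = t-\delay(t)$ throughout and using Lemmas~\ref{lemma:beygel} and~\ref{lemma:plowerbound} as the two main inputs. First I would discharge the hypothesis of Lemma~\ref{lemma:beygel}: Lemma~\ref{lemma:plowerbound} is exactly the statement that the rejection thresholds of Algorithm~\ref{alg:delay-iwal} satisfy $p(\samp{\numex{t}},x) \geq 1/\numex{t+1}^{\numex{t+1}}$, so Lemma~\ref{lemma:beygel} applies and, with $\delta$ fixed once and for all, there is an event of probability at least $1-\delta$ on which the uniform deviation bound \eqref{eqn:abs-dev} holds simultaneously for all $t \geq 1$ and all $\hplain \in \H$. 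Everything below is deterministic given this event, and the left inequality $\err(\hyp{t})-\err(\hopt)\geq 0$ is immediate since $\hopt$ minimizes true risk over $\H$.

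For the right inequality I would instantiate \eqref{eqn:abs-dev} at time $t$ with $\hplain=\hyp{t}$, which rearranges to
\begin{equation*}
  \err(\hyp{t}) - \err(\hopt) \;\leq\; \bigl(\err(\hyp{t},S_t) - \err(\hopt,S_t)\bigr) + \sqrt{\frac{\epsilon_t}{\pmin{t}{\hyp{t}}}} + \frac{\epsilon_t}{\pmin{t}{\hyp{t}}},
\end{equation*}
with $\epsilon_t$ the quantity of Lemma~\ref{lemma:beygel}; the choice $C_0 = \order(\log(|\H|/\delta))$ guarantees $\epsilon_t \leq C_0\log(\numex{t}+1)/\numex{t}$. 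So the first displayed bound of the theorem --- and, after dropping $\err(\hyp{t},S_t)-\err(\hopt,S_t)\leq 0$ (valid since $\hyp{t}$ minimizes empirical risk over $\H$), also the excess-risk bound --- follows as soon as $\pmin{t}{\hyp{t}}\geq 1/2$, because that makes each of the two tail terms at most its counterpart in the theorem.

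The crux, and the step I expect to be the real obstacle, is thus to show $\pmin{t}{\hyp{t}} \geq 1/2$ on this event, i.e., that the algorithm queried with probability at least $1/2$ on every round $s\leq\numex{t}$ with $\hyp{t}(\ex{s})\neq\hopt(\ex{s})$. Following Beygelzimer et al.\ I would induct on $t$ and reason about the round $s^\star$ attaining this minimum: if $\queryp{s^\star}=1$ there is nothing to show, so suppose $\queryp{s^\star}<1$, i.e., $\queryp{s^\star}$ solves \eqref{eqn:iwal-queryp}. Since $\hyp{t}(\ex{s^\star})\neq\hopt(\ex{s^\star})$, at least one of $\hopt,\hyp{t}$ disagrees with the round-$s^\star$ minimizer $\hyp{s^\star}$ at $\ex{s^\star}$, hence is admissible for $\ohyp{s^\star}$, which bounds the error gap $G_{s^\star}$ above by the empirical excess error of that competitor; feeding this through \eqref{eqn:abs-dev} at time $s^\star$ --- using the inductive hypothesis for the $\hyp{s^\star}$ term (as $s^\star<t$) and the displayed bound above for the $\hyp{t}$ term (which feeds the desired conclusion back into itself) --- yields a bound of the shape $G_{s^\star}\leq a\sqrt{\epsilon_{s^\star}/\queryp{s^\star}}+b\,\epsilon_{s^\star}/\queryp{s^\star}+\order(\sqrt{\epsilon_{s^\star}}+\epsilon_{s^\star})$. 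Confronting this with the defining equation \eqref{eqn:iwal-queryp} for $\queryp{s^\star}$ and using that, for the prescribed constants $c_1=5+2\sqrt 2$ and $c_2=5$, the two sides coincide exactly at argument $1/2$ while the query-threshold side grows faster as the argument shrinks, forces $\queryp{s^\star}\geq 1/2$. All of this goes through in the delayed setting because Algorithm~\ref{alg:delay-iwal} uses precisely the Beygelzimer et al.\ threshold with $n$ replaced by $\numex{s}$, and Lemmas~\ref{lemma:beygel} and~\ref{lemma:plowerbound} have already been re-established with this substitution; the only real care needed is in tracking the indices $\numex{s}$.
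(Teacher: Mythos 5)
Your proposal is correct and follows essentially the same route as the paper: discharge the hypothesis of Lemma~\ref{lemma:beygel} via Lemma~\ref{lemma:plowerbound}, apply the deviation bound~\eqref{eqn:abs-dev} with $\hplain=\hyp{t}$, and reduce everything to showing $\pmin{t}{\hyp{t}}\geq 1/2$, which the paper handles by observing that the Beygelzimer et al.\ argument depends only on the query probabilities being set via an equation of the form~\eqref{eqn:iwal-queryp} with the matching $\epsilon_t$. You simply spell out the inductive $\pmin{t}{\hyp{t}}\geq 1/2$ argument that the paper imports by citation.
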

\begin{proof}[Proof of Theorem~\ref{thm:generalization}]
  In order to establish the statement of the theorem from
  Lemma~\ref{lemma:beygel}, we just need to control the minimum
  probability over the points misclassified relative to $\hopt$,
  $\pmin{t}{\hyp{t}}$. In order to do so, we observe that the proof of
  Theorem 2 in~\cite{BeygelzimerHLZ10} only relies on the fact that
  query probabilities are set based on an equation of the
  form~\eqref{eqn:iwal-queryp}. Specifically, their proof establishes
  that assuming we have $G_t = (c_1/\sqrt{s} - c_1 +
  1)\sqrt{\epsilon_t} + (c_2/s - c_2 + 1)\epsilon_t$ for the same
  sequence $\epsilon_t$ coming from Lemma~\ref{lemma:beygel}, then the
  statement of the theorem holds. Since this is exactly our setting,
  the proof applies unchanged yielding the desired theorem statement.
\end{proof}

\section{Label complexity lemma}
\label{app:label:complexity}

In this section we derive a natural generalization of the key
lemma~\cite{BeygelzimerHLZ10} for bounding the query complexity.
\begin{lemma}
  \label{lemma:label-delay}
  Assume the bounds from Equation~\ref{eqn:abs-dev} hold for all
  $\hplain \in \H$ and $t \geq 1$. For any $t \geq 1$,
  \begin{equation*}
    \E[\query{t}] \leq \theta\cdot\, 2\,\err(\hopt) + \order\left(
    \theta\cdot \sqrt{\frac{C_0\log(\numex{t}+1)}{\numex{t}}} +
    \theta\cdot \frac{C_0\log(\numex{t}+1)}{\numex{t}} \right).
  \end{equation*}
\end{lemma}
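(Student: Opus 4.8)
The plan is to follow the proof of the corresponding lemma in Beygelzimer et al.~\cite{BeygelzimerHLZ10}, checking that every step goes through when $n$ is replaced by $\numex{t} = t - \delay(t)$ and when the training sample is the delayed history $\samp{\numex{t}}$ rather than $z_{1:t-1}$. The object to bound is $\E[\query{t}] = \E[\queryp{t}]$, where $\queryp{t}$ is the query probability assigned by Algorithm~\ref{alg:delay-iwal}. The query probability is $1$ exactly when the estimated gap $G_t$ is small (below $\sqrt{\epsilon_t} + \epsilon_t$ in the shorthand of Appendix~\ref{app:delayed}), and otherwise solves \eqref{eqn:iwal-queryp}; a short calculation with that equation shows that in the second case $\queryp{t} = O(\epsilon_t / G_t^2 + \epsilon_t/G_t)$ when $G_t$ is large, so in all cases $\queryp{t} \le \min\{1,\ O(\epsilon_t/G_t^2 + \epsilon_t / G_t)\}$ whenever $G_t>0$ (and $\queryp{t}=1$ when $G_t \le \sqrt{\epsilon_t}+\epsilon_t$, which is subsumed by the bound with an extra constant). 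Hence it suffices to lower bound $G_t$ in terms of how far $\ex{t}$ is from the decision boundary, and then convert that into a probability bound via the disagreement coefficient.

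First I would invoke the deviation bound of Lemma~\ref{lemma:beygel} (Equation~\eqref{eqn:abs-dev}), which by hypothesis holds for all $\hplain\in\H$ and all $t$. Applied to $\hyp{t}$ and to $\ohyp{t}$, it shows that the empirical gap $G_t = \err(\hyp{t},S_t) - \err(\ohyp{t},S_t)$ is within $O(\sqrt{\epsilon_t/\pmin{t}{\ohyp{t}}} + \epsilon_t/\pmin{t}{\ohyp{t}})$ of the population gap $\err(\hyp{t}) - \err(\ohyp{t})$. Since $\ohyp{t}$ is the best hypothesis disagreeing with $\hyp{t}$ at $\ex{t}$, the population gap is nonnegative, and together with Theorem~\ref{thm:generalization} (which controls $\err(\hyp{t}) - \err(\hopt)$) one gets that if $\queryp{t} < 1$ then $\ex{t} \in \mathrm{DIS}(\hopt, r_t)$ for a radius $r_t$ of order $\err(\hopt) + \sqrt{\epsilon_t} + \epsilon_t$, by the standard argument: a point queried with nontrivial probability forces the existence of a hypothesis close to $\hopt$ in error that disagrees with $\hopt$ there. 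Taking expectations and using the definition of the disagreement coefficient $\theta$, we get $\E[\query{t}] \le \P(\ex{t}\in\mathrm{DIS}(\hopt,r_t)) \le \theta\, r_t$, which is exactly $2\theta\,\err(\hopt) + O(\theta(\sqrt{\epsilon_t}+\epsilon_t))$; recalling $\epsilon_t = C_0\log(\numex{t}+1)/\numex{t}$ gives the claimed bound.

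The main obstacle is the bookkeeping around $\pmin{t}{\ohyp{t}}$ — the minimum query probability over past rounds where $\ohyp{t}$ disagrees with $\hopt$ — which appears in \eqref{eqn:abs-dev} and could in principle be tiny, weakening the deviation bound to the point where the argument no longer closes. In~\cite{BeygelzimerHLZ10} this is handled by a self-bounding/induction argument showing that the query probabilities are never too small relative to $\epsilon$ precisely because they were themselves chosen via \eqref{eqn:iwal-queryp}; I would reproduce that argument, now indexed by $\numex{s}$ instead of $s$. The only thing to verify is that the sequence $\numex{s}$ is nondecreasing enough for the telescoping/monotonicity steps to survive (it need not be strictly increasing, but $\numex{s}\le\numex{t}$ for $s\le t$ and the $\epsilon$-sequence is, up to constants, monotone in $\numex{}$), and that is exactly the kind of substitution already carried out in the proof of Lemma~\ref{lemma:plowerbound}. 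Everything else is a routine re-derivation.
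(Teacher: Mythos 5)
Your proposal takes essentially the same route as the paper: both reduce the lemma to the label-complexity argument of Beygelzimer et al.~\cite{BeygelzimerHLZ10}, observing that it only uses the deviation bound \eqref{eqn:abs-dev}, the generalization bound of Theorem~\ref{thm:generalization}, the query rule \eqref{eqn:iwal-queryp}, and (near-)monotonicity of $\epsilon_t$, all of which survive the substitution $n \to \numex{t} = t - \delay(t)$. You additionally unpack the internals of that argument (bounding $\queryp{t}$ via $G_t$ and the disagreement coefficient) and correctly flag the $\pmin{t}{\cdot}$ and monotonicity issues that the paper also identifies as the only points requiring care.
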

\begin{proof}
  The proof of this lemma carries over unchanged from Beygelzimer et
  al.~\cite{BeygelzimerHLZ10}. A careful inspection of their
  proof shows that they only require $\epsilon_t$ defined in
  Equation~\ref{eqn:eps-def} with query probabilities chosen as in
  Equation~\ref{eqn:iwal-queryp}. Furthermore, we need the statements
  of Lemma~\ref{lemma:beygel} and
  Theorem~\ref{thm:generalization} to hold with the same setting
  of $\epsilon_t$. Apart from this, we only need the sequence
  $\epsilon_t$ to be monotone non-increasing, and $\hyp{t}, \ohyp{t}$
  to be defined based on samples $\samp{\numex{t}}$. Since all these
  are satisfied in our case with $\numex{t}$ appropriately redefined
  to $t - \delay{(t)}$, we obtain the statement of the lemma by
  appealing to the proof of~\cite{BeygelzimerHLZ10}.
\end{proof}

\end{document}